\relax
\documentclass[letterpaper]{article} 
\usepackage{aaai19}  
\usepackage{times}  
\usepackage{helvet} 
\usepackage{courier}  
\usepackage[hyphens]{url}  
\usepackage{graphicx} 
\urlstyle{rm} 

\usepackage{amsmath,amssymb,amsfonts}
\usepackage{xspace}
\usepackage{enumitem}
\usepackage{algorithm}
\usepackage{algorithmicx}
\usepackage[noend]{algpseudocode}
\usepackage{algpseudocode}
\usepackage{parskip}
\usepackage{amsthm}
\usepackage{subfig}

\frenchspacing  
\setlength{\pdfpagewidth}{8.5in}  
\setlength{\pdfpageheight}{11in}  

\algrenewcommand\algorithmicindent{1.0em}%

\newcommand{\calX}{\ensuremath{\mathcal{X}}\xspace}
\newcommand{\calL}{\ensuremath{\mathcal{L}}\xspace}
\newcommand{\calS}{\ensuremath{\mathcal{S}}\xspace}
\newcommand{\calR}{\ensuremath{\mathcal{R}}\xspace}
\newcommand{\calD}{\ensuremath{\mathcal{D}}\xspace}
\newcommand{\calP}{\ensuremath{\mathcal{P}}\xspace}

\newcommand{\sAttract}{\ensuremath{s^{\text{attractor}}_i}\xspace}
\newcommand{\sStart}{\ensuremath{s_{\text{start}}\xspace}}
\newcommand{\sGoal}{\ensuremath{s_{\text{goal}}\xspace}}

\DeclareMathOperator*{\argmin}{arg\,min}

\newtheorem{lemma}{Lemma}
\newtheorem{definition}{Definition}
\newtheorem{cor}{Corollary}


\pdfinfo{
/Title (Provable Indefinite-Horizon Real-Time Planning for Repetitive Tasks)
/Author (
Fahad Islam, Oren Salzman and Maxim Likhachev
)}

\setcounter{secnumdepth}{2}  

%
\title{Provable Indefinite-Horizon Real-Time Planning for Repetitive Tasks}
\author{
Fahad Islam,
Oren Salzman {\normalfont and}
Maxim Likhachev
\thanks{This research was in part sponsored by ARL, under the Robotics CTA program grant W911NF-10-2-0016.
}
\\
The Robotics Institute, Carnegie Mellon University\\
\{fi,osalzman\}@andrew.cmu.edu,
maxim@cs.cmu.edu
}

\begin{document}
\maketitle

\begin{abstract} 
In many robotic manipulation scenarios, robots often have to perform highly-repetitive tasks in structured environments e.g. sorting mail in a mailroom or pick and place objects on a conveyor belt.
In this work we are interested in settings where the tasks are similar, yet not identical (e.g., due to uncertain orientation of objects) and motion planning needs to be extremely fast. 
Preprocessing-based approaches prove to be very beneficial in these settings---they analyze the configuration-space offline to generate some auxiliary information which can then be used in the query phase to speedup planning times. 
Typically, the tighter the requirement is on query times the larger the memory footprint will be. In particular, for high-dimensional spaces, providing real-time planning capabilities is extremely challenging.
While there are planners that guarantee real-time performance by limiting the planning horizon, we are not aware of general-purpose planners capable of doing it for indefinite horizon (i.e., planning to the goal).
To this end, we propose a preprocessing-based method that provides \emph{provable} bounds on the query time while incurring only a small amount of memory overhead in the query phase.
We evaluate our method on a 7-DOF robot arm and show a speedup of over tenfold in query time when compared to the PRM algorithm.
\end{abstract}

\section{Introduction}
We consider the problem of planning robot motions for highly-repetitive tasks while ensuring bounds on the planning times. There exists manipulation domains where the planning takes non-trivial amount of time, despite the fact that the scenarios are well-structured. Specifically we consider the settings where the environment does not change, the start and goal in each task are similar, yet not identical to the start and goal in previous tasks.

As a running example, consider the problem of mail-sorting where a robot has to put envelopes into appropriate bins (see Fig.~\ref{fig:PR2}). The newly-inserted envelopes do not present any new clutter, and therefore the domain is really static. This domain is being actively pursued in both industry (see for example~\cite{DBot}) and in academia (e.g.,~\cite{hwang2015lazy}).

Another well-suited domain for our planner is a manipulator working at a conveyor belt where it has to pick up objects (arbitrarily positioned and oriented) coming on the conveyor (or drop them off onto a conveyor). In such a setting the robot has to deal with one object at a time and other objects do not cause obstructions. As a result, the domain is also static but start/goal configurations may change. An autonomous robot working at a conveyor is also actively being pursued in industry (see again~\cite{DBot}) and in academia (see \cite{cowley2013perception,menon2014motion}).

\begin{figure}[tb]
  \centering
    \includegraphics[width=0.42\textwidth]{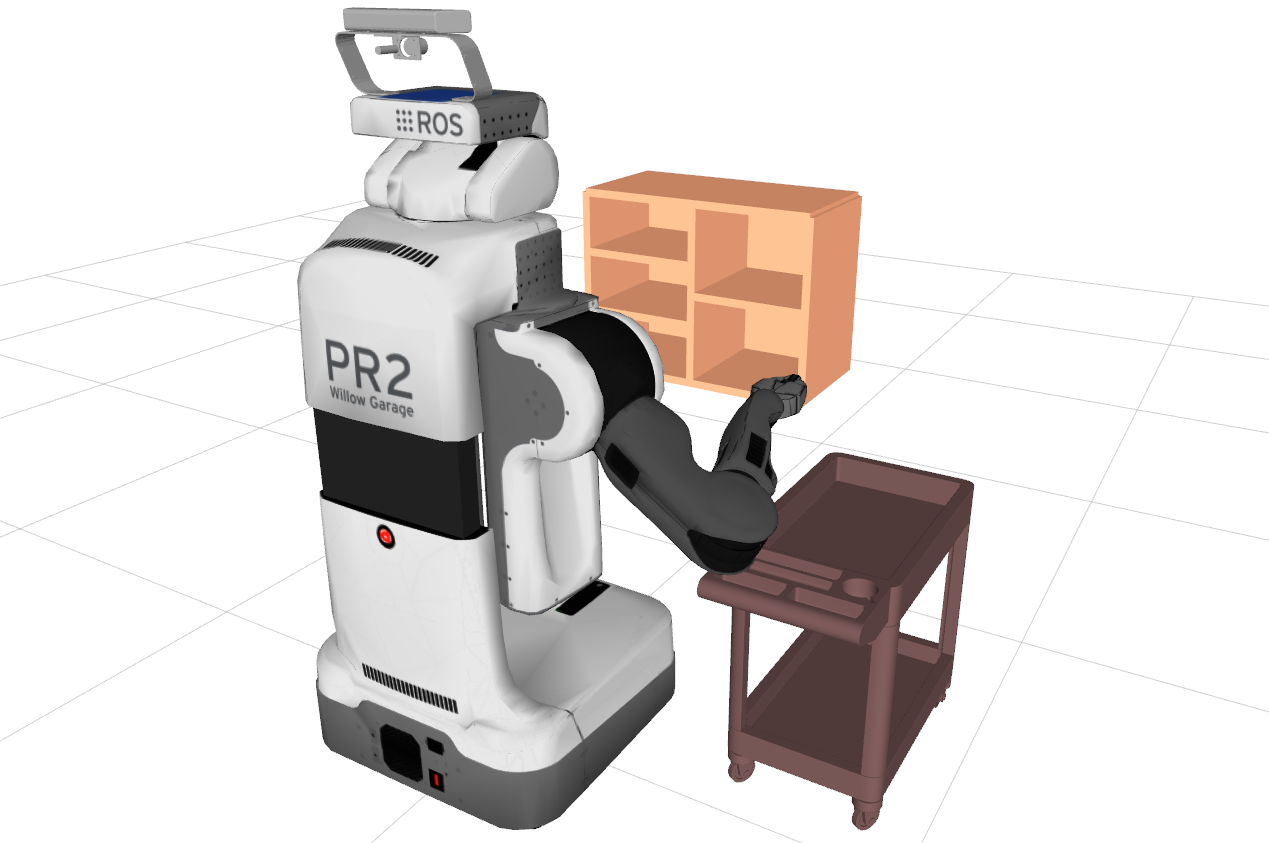}
  \caption{
  Motivating scenario---a robot (PR2) to perform mail-sorting task in a mailroom environment.
}
    \label{fig:PR2}
\end{figure}

Clearly, every time a task is presented to the robot, it can compute a desired path.
However, this may incur large online planning times that may be unacceptable in many settings.
Alternatively, we could attempt to precompute for each start and goal pair a robot path.
However, as the set of possible start and goal locations may be large, caching pre-computed paths for all these queries in advance is unmanageable in high-dimensional configuration spaces\footnote{
A robot configuration is a $d$-dimensional point describing the position of each one of the robot's $d$ joints.
The configuration space of a robot is the $d$-dimensional space of all robot configurations.}.
Thus, we need to balance memory constraints while providing provable real-time query times.

As we detail in Sec.~\ref{sec:rel}, there has been intensive work 
for fast online planning~\cite{LA18} and 
for learning from experience in known environments~\cite{PCCL12,PDCL13,berenson2012robot,CSMOC15}.
Similarly, compressing precomputed data structures in the context of motion-planning is a well-studied problem with efficient algorithms~\cite{SSAH14,DB14}.
However, to the extent of our knowledge, there is no approach that can \emph{provably guarantee} that a solution will be found to \emph{any} query with bounds on planning time and using a small memory footprint.


In this work, we consider the specific case where the start is fixed. Returning to our running example, this corresponds to having a fixed pickup location above the cart (see Fig.~\ref{fig:PR2}).
Our key insight is that given any state $s$, we can efficiently compute a set of states for which a greedy search (to be defined formally in Sec.~\ref{sec:alg}) towards~$s$ is collision free.
Importantly, the runtime-complexity of such a greedy search is bounded and there is no need to perform computationally-complex collision-detection operations. 
This insight allows us to generate in an offline phase a small set of so-called ``attractor vertices'' together with a path between each attractor vertex and the start.
In the query phase, a path is generated by performing a greedy search from the goal to an attractor state followed by the precomputed path from the start.
We describe our approach in Sec.~\ref{sec:alg} and analyze it in Sec.~\ref{sec:analysis}.

We evaluate our approach in Sec.~\ref{sec:eval} in simulation on the PR2 robot\footnote{~\url{http://www.willowgarage.com/pages/pr2/overview}} (see Fig.~\ref{fig:PR2}).
We demonstrate a speedup of over tenfold in query time when compared to the PRM algorithm with a memory footprint of less than 8 Mb while guaranteeing a maximal query time of less than 3 milliseconds (on our machine).


\section{Related work}
\label{sec:rel}
A straightforward approach to efficiently preprocess a known environment is using the PRM algorithm~\cite{kavraki1996probabilistic} which generates a \emph{roadmap}\footnote{A roadmap is a graph embedded in the configuration space where vertices correspond to configurations and edges correspond to paths connecting close-by configurations.}.
Once a a dense roadmap has been pre-computed, any query can be efficiently answered online by connecting the start and goal to the roadmap.
Query times can be significantly sped up by further preprocessing the roadmaps using landmarks~\cite{paden2017landmark}.
Unfortunately, there is no guarantee that a query can be connected to the roadmap as PRM only provides \emph{asymptotic} guarantees~\cite{KKL98}.
Furthermore, this connecting phase requires running a collision-detection algorithm which is typically considered the computational bottleneck in many motion-planning algorithms~\cite{L06}.

Recently, the repetition roadmap~\cite{LA18} was suggested as a way to extend the PRM for the case of multiple highly-similar scenarios.
While this approach exhibits significant speedup in computation time, it still suffers from the previously-mentioned shortcomings.

A complementary approach to aggressively preprocess a given scenario is by minimizing collision-detection time.
However this requires designing robot-specific
circuitry~\cite{MFQSK16}
or limiting the approach to standard manipulators~\cite{YMILV18}.

An alternative approach to address our problem is to precompute a set of complete paths into a library and given a query, attempt to match complete paths from the library to the new query~\cite{berenson2012robot,jetchev2013fast}.
Using paths from previous search episodes (also known as using experience) has also been an active line of work~\cite{PCCL12,PDCL13,berenson2012robot,CSMOC15}.
Some of these methods have been integrated with sparse motion-planning roadmaps (see e.g.,~\cite{SSAH14,DB14}) to reduce the memory footprint of the algorithm.
Unfortunately, none of the mentioned algorithms provide bounded planning-time guarantees that are required by our applications.

Our work bears resemblance to previous work on 
subgoal graphs~\cite{UK17,UK18} and to real-time planning~\cite{KL06,KS09,K90}.
However, in the former, the entire configuration space is preprocessed in order to efficiently answer queries between \emph{any} pair of states which deems it applicable only to low-dimensional spaces (e.g., 2D or 3D).
Similarly, in the latter, to provide guarantees on planning time the search only looks at a finite horizon, generating a partial plan,  and interleaves planning and execution.

Finally, our notion of attractor states is similar to control-based methods that  ensure safe operation over local regions of the free configuration space~\cite{CRC03,CCR06}.
These regions are then used within a high-level motion planner to compute collision-free paths.

\section{Algorithm Framework}
\label{sec:alg}
In this section we describe our algorithmic framework. We start (Sec.~\ref{sec:pdef}) by formally defining our problem and continue (Sec.~\ref{sec:key}) by describing the key idea that enables our approach.
We then proceed (Sec.~\ref{subsec:alg}) to detail our algorithm and conclude with implementation details (Sec.~\ref{subsec:impl}).

\subsection{Problem formulation and assumptions}
\label{sec:pdef}
Let $\calX$ be the configuration space of a robot operating in a static environment containing obstacles.
We say that a configuration is valid (invalid) if the robot, placed in that configuration does not (does) collide with obstacles, respectively.
We are given in advance a start configuration~$\sStart \in \calX$ and some goal region~$G \subset \calX$.
We emphasize that the goal region may contain invalid configurations.
In the query phase we are given multiple queries $(\sStart, s_{\text{goal}})$ where $s_{\rm goal} \in G$ is a valid configuration and for each query, we need to compute a collision-free path connecting $\sStart$ to $s_{\text{goal}}$.

Coming back to our motivating example of mail sorting---the start configuration would be some predefined configuration above the cart where the robot can pick up the envelopes from and the goal region would comprise of all possible placements of the robot's end effector in the cubbies. The environment is static as the only obstacles in the environment are the shelves and the cart which remain stationary in between queries.

We discretize $\calX$ into a state lattice $\calS$ such that any state~$s \in \calS$ is connected to a set of successors and predecessors via a mapping Succs/Preds: $\calS \rightarrow 2^\calS$.
Define $G_\calS := \calS \cap G$ to be the states that reside in the goal region. Note that although our approach is applicable to general graphs (directed or undirected), to be able to reuse the planned path in the reverse direction (e.g. in our motivating example for the motion from the shelve to the start configuration) the graph needs to be undirected.
We make the following assumptions:

\begin{enumerate}[label={\textbf{A\arabic*}}]
  \item \label{assum:1} $G_\calS$ is a relatively small subset of $S$. Namely, it is feasible to exhaustively iterate over all states in $G_\calS$.
However, storing a path from $\sStart$ to each state in $G_\calS$ is infeasible.
  
  \item \label{assum:2} The planner has access to a heuristic function $h: \calS \times \calS \rightarrow \mathbb{R}$ which can estimate the distance between any two states in $G_\calS$. Moreover, 
 \begin{itemize}
  \item The heuristic function should be \textit{weakly-monotone} with respect to $G_\calS$, meaning that $\forall s_1, s_2  \in G_\calS$ where $s_1 \neq s_2 $, it holds that,
  \begin{center}
    $h(s_1, s_2) \geq \min\limits_{s_1' \in \text{Preds}(s_1)} h(s_1', s_2)$.
  \end{center}

  \item The heuristic function $h$ should induce that the goal region is \emph{convex} with respect to $h$, meaning that $\forall s_1, s_2  \in G_\calS$ where $s_1 \neq s_2 $, it holds that,
  \begin{center}
     $\argmin\limits_{s_1' \in \text{Preds}(s_1)} h(s_1', s_2) \in G_\calS$.
  \end{center}

 \end{itemize}
 Namely, for any distinct pair of states ($s_1, s_2$) in $G_\calS$, at least one of $s_1$'s predecessors has a heuristic value less than or equal to its heuristic value.
 Moreover, the predecessor with minimal heuristic value lies in the goal region.

  \item \label{assum:3} The planner has access to a tie-breaking rule that can be used to define a total order \footnote{A total order is a binary relation on some set which is anti-symmetric, transitive, and a convex relation.} over all states with the same heuristic value.
  \end{enumerate}

These assumptions allow us to establish strong theoretical properties regarding the efficiency of our planner. Namely, that
within a known bounded time, we can compute a collision-free path from $\sStart$ to any state in $G_\calS$.

Assumption~\ref{assum:2} may seem too restrictive (especially convexity), imposing that the goal region cannot be of arbitrary structure. However, after we detail our algorithm (Sec.~\ref{subsec:alg}) and analyze its theoretical properties (Sec.~\ref{sec:analysis}), we sketch how we can relax this assumption to be less restrictive.

\subsection{Key idea}
\label{sec:key}
Our algorithm relies heavily on the notion of a greedy search.  Thus, before we describe of our algorithm, we formally define the terms greedy predecessor and greedy search.

\vspace{2mm}
\begin{definition}
\label{def:greedy-suc}
  Let $s$ be some state and $h(\cdot)$ be some heuristic function.
  A state $s' \in \text{Preds}(s)$ is said to be a \emph{greedy} predecessor of $s$ according to $h$ if it has the minimal $h$-value among all of $s$'s predecessors.
\end{definition}
Note that 
if $h$ is weakly monotone with respect to $G_\calS$
(Assumption~\ref{assum:2}) 
and we have some tie-breaking rule
(Assumption~\ref{assum:3}), then every state has a greedy predecessor in the $G_\calS$ and it is unique.
In the rest of the text, when we use the term greedy predecessor, we assume that it is unique and in $G_\calS$.

\vspace{2mm}
\begin{definition}
  Given a heuristic function $h(\cdot)$,
  an algorithm is said to be a \emph{greedy search}  with respect to $h$ if for every state it returns its greedy predecessor according to $h$.
\end{definition}
Note that we define the greedy search in terms of the predecessors and not the successors to account for the directionality of the graph. This will become more clear in Sec.~\ref{subsec:alg}.

\textbf{Remark:} Now that we have the notion of greedy search, we can better explain our definition of convexity with respect to~$h$ (Assumption~\ref{assum:2});
this assumption ensures that a greedy search between any pair of states lies within the goal region, analogously to the standard notion of a convex region where for every pair of points within the region, every point on the straight line segment that joins the pair of points is also within the region.

Our key insight is to precompute in an offline phase subregions within the goal region where a greedy search to a certain (``attractor'') state is guaranteed to be collision free and use these subregions in the query phase.
Specifically, in the preprocessing phase, $G_\calS$ is decomposed into a finite  set of (possibly overlapping) subregions $\calR$.
Each subregion $R_i \in \calR$ is a hyper-ball defined using a center which we refer to as the ``attractor state''~
\sAttract and a radius $r_i$.
These subregions, which may contain invalid states, are constructed in such a way that the following two properties hold
\begin{enumerate}[label={\textbf{P\arabic*}}]
  \item \label{property:1} For any valid goal state $s_{\text{goal}} \in R_i \cap G_\calS$, a greedy search with respect to $h(s, \sAttract)$ over $\calS$ starting at $\sGoal$ will result in a collision-free path to \sAttract.
  \item \label{property:2} The union of all the subregions completely cover the valid states in $G_\calS$. 
      Namely, $\forall s \in G_\calS~\text{s.t.}~s~\text{is valid}, \exists R \in \calR \ s.t. \ s \in R$.
\end{enumerate}

In the preprocessing stage, we precompute a library of collision-free paths $\calL$ which includes a path from $\sStart$ to each attractor state. 
In the query phase, given a query~\sGoal, we 
(i)~identify a subregion $R_i$ such that $\sGoal \in R_i$ (using the precomputed radii~$r_i$),
(ii)~run a greedy search towards~\sAttract by greedily choosing at every point the predecessor that minimizes~$h$ and
(iii)~append this path with the precomputed path in $\calL$ to $\sStart$ to obtain the complete plan.
For a visualization of our algorithm, see Fig.~\ref{fig:approach}.

\begin{figure}
\centering
\includegraphics[width=0.45\textwidth]{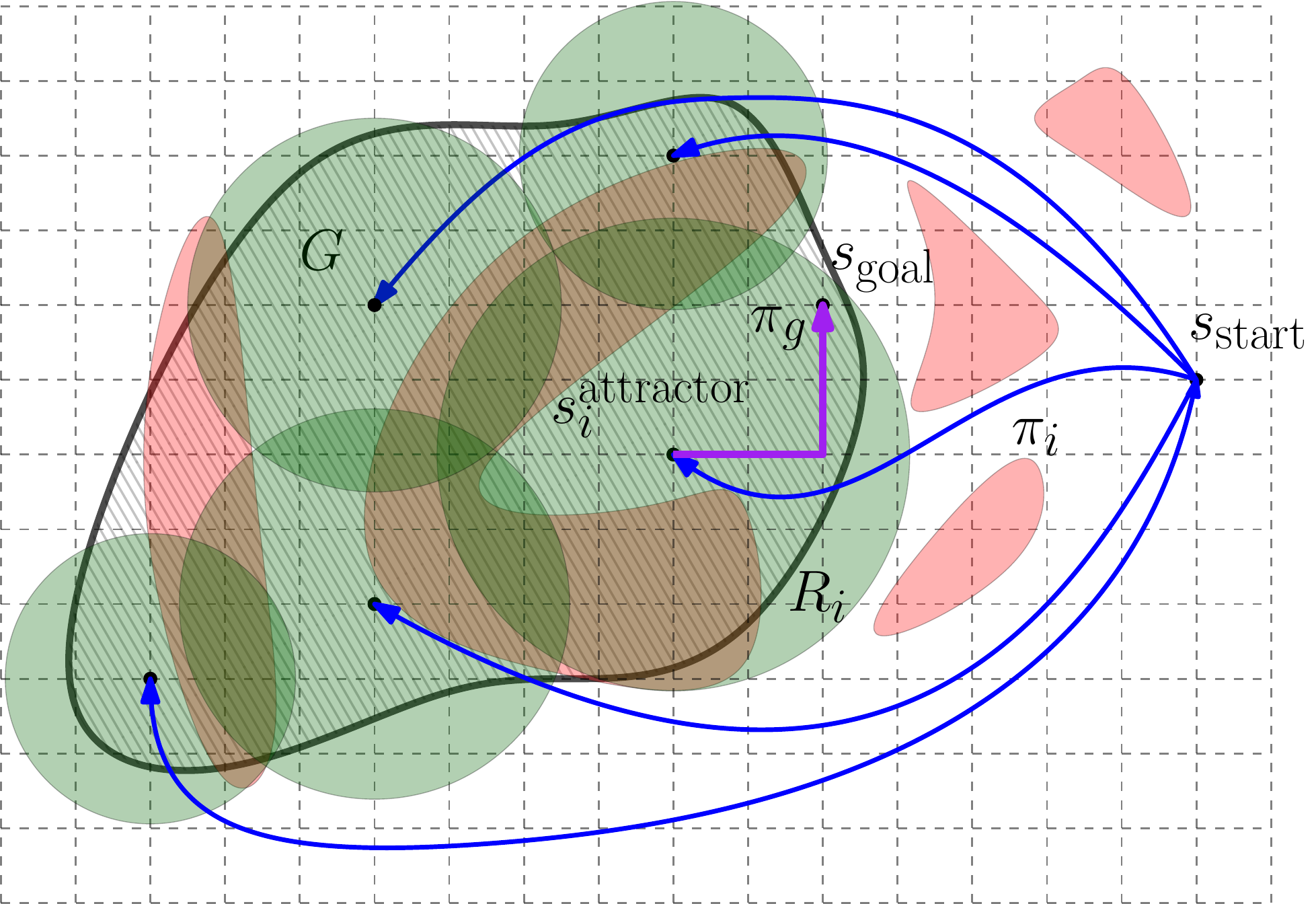}
  \caption{
  Visualization of approach. Subregions are depicted in green, 
  goal region~$G$ is depicted in tiled gray  containing obstacles (red).
  Precomputed paths from $s_{\rm{start}}$ to attractor states are depicted in blue.
 Given a query $(s_{\rm{start}}, s_{\rm{goal}})$, the returned path is given by $\pi_i$ appended with a greedy path ~$\pi_g$ from $s_{\rm{goal}}$ to \sAttract (reversed), which is depicted in purple.
}
    \label{fig:approach}
\end{figure}

\subsection {Algorithm}
\label{subsec:alg}
\subsubsection{Preprocessing Phase}
The preprocessing phase of our algorithm, detailed in Alg.~\ref{alg:1}, takes as input the start state~$\sStart$, the goal region~$G_\calS$ and a conventional motion planner $\calP$, and outputs a set of subregions~$\calR$ and the corresponding library of paths~$\calL$ from~\sStart to each~\sAttract. 

The algorithm covers~$G_\calS$ by iteratively finding a state $s$ not covered\footnote{Here, a state $s$ is said to be covered if there exists some subregion $R \in \calR$ such that $s \in R$.} by any subregion and computing a new subregion centered at $s$.
To ensure that $G_\calS$ is completely covered (Property~\ref{property:2}) we maintain a set~$V$ of valid (collision free) and a set $I$ of invalid (in collision) states called \emph{frontier states} (lines~\ref{alg:1:v} and~\ref{alg:1:i}, respectively).
We also construct subregions centered around invalid states~$\hat{\calR}$ to efficiently store which invalid states have been considered.
We start by initializing~$V$ with some random state in $G_\calS$ and iterate until both~$V$ and~$I$ are empty, which will ensure that $G_\calS$ is indeed covered even if $G_\calS$ is not fully connected.

At every iteration, we pop a state from $V$ (line~\ref{alg:1:pop}), and if there is no subregion covering it, we add it as a new attractor state and compute a path $\pi_i$ from $\sStart$ (line~\ref{alg:1:path}) using the planner $\calP$.
We then compute the corresponding subregion (line~\ref{alg:1:cr} and Alg.~\ref{alg:2}).

As we will see shortly, computing a subregion corresponds to a Dijkstra-like search centered at the attractor state.
The search terminates with the subregion's radius $r_i$ and a list of frontier states that comprise of the subregion's boundary.
The valid and invalid frontier states are then added to $V$ and $I$, respectively (lines~\ref{alg:1:insert_v} and~\ref{alg:1:insert_i}).

Once $V$ gets empty the algorithm starts to search for states which are valid and yet uncovered by growing subregions around invalid states popped from~$I$ (lines~\ref{alg:1:iv_loop}-\ref{alg:1:iv_region} and detailed in Alg.~\ref{alg:3b}). If a valid and uncovered state is found, it is added to $V$ and the algorithm goes back to computing subregions centered at valid states (lines~\ref{alg:1:x_states}-\ref{alg:1:break}), otherwise if $I$ also gets empty, the algorithm terminates and it is guaranteed that each valid state contained in $G_\calS$ is covered by at least one subregion.

\begin{algorithm}[t]
\footnotesize
\hspace*{\algorithmicindent} \textbf{Inputs:} $G_\calS$, $\sStart$, $\calP$
\Comment{goal region, start state and a planner} 

\hspace*{\algorithmicindent} \textbf{Outputs:} 
$\calR, \calL$
\Comment{subregions and corresponding paths to $\sStart$}

\caption{Goal Region Preprocessing}\label{alg:1}
\begin{algorithmic}[1]
\Procedure{PreprocessRegion}{$G_\calS$}
  \State $s \leftarrow$\textsc{ SampleValidState}($G_\calS$)
  \State $V \leftarrow \{ s \}$   \Comment{valid frontier states initialized to random {state}} \label{alg:1:v}
  \State $I$ = $\emptyset$   \Comment{invalid frontier states} \label{alg:1:i}
  \State $ i \leftarrow 0$
       \hspace{2mm} 
       $\calL = \emptyset$
       \hspace{2mm} 
       $\calR = \emptyset$
       \hspace{2mm} 
       $\hat{\calR} = \emptyset$
       
  \vspace{2mm}
    \While {$V$ and $I$ are not empty}
        \While {$V$ is not empty}
          \State $s \leftarrow V.\text{pop}()$ \label{alg:1:pop}
            \If {$\nexists R \in \calR$  s.t. $s \in R$ }  \label{alg:1:discard}
            \Comment{$s$ is not covered}
        \State $\sAttract \leftarrow s$               
        \label{alg:1:attract} 
                \State $\pi_i$ = $\calP.$\textsc{PlanPath}($\sStart, \sAttract$); \label{alg:1:pp}
                \hspace{2mm }
                $\calL \leftarrow \calL \cup \{ \pi_i \}$  \label{alg:1:path}
                \State $(\text{OPEN}, r_i) \leftarrow$ \textsc{ComputeReachability}($\sAttract$) \label{alg:1:cr}
                \State insert Valid(OPEN) in $V$  \label{alg:1:insert_v}
                \State insert Invalid(OPEN) in $I$   \label{alg:1:insert_i}
                \State $R_i$ $\leftarrow$ $(\sAttract, r_i)$; 
                \hspace{2mm} $ i \leftarrow i+1$        \hspace{2mm }
                $\calR \leftarrow \calR \cup \{ R_i \}$
                            \EndIf
        \EndWhile

\vspace{2mm}        
        
        \While {$I$ is not empty} \label{alg:1:iv_loop}
            \State $s$ $\leftarrow$ $I.pop()$
      \If {$\nexists R \in \calR \cup \hat{\calR}$ s.t. $s \in R$ }      \Comment{$s$ is not covered}
\State $(X, r)$ $\leftarrow$ \textsc{FindValidUncoveredState}($s$)
                \State $\hat{R}$ $\leftarrow$ $(s,r)$;
        \hspace{2mm}
        $\hat{\calR} \leftarrow \hat{\calR} \cup \{ \hat{R} \}$ \Comment{invalid subregion}  \label{alg:1:iv_region}
                \If {$X$ is not empty}  \Comment{valid state found}  \label{alg:1:x_states}
                    \State insert $X$ in $V$
                    \State \textbf{break} \label{alg:1:break}
                \EndIf
            \EndIf
        \EndWhile
    \EndWhile

  \vspace{2mm}

  \State \Return $\calR, \calL$
\EndProcedure
\end{algorithmic}
\end{algorithm}

\begin{figure}[tb]
  \centering
    \includegraphics[width=0.3\textwidth]{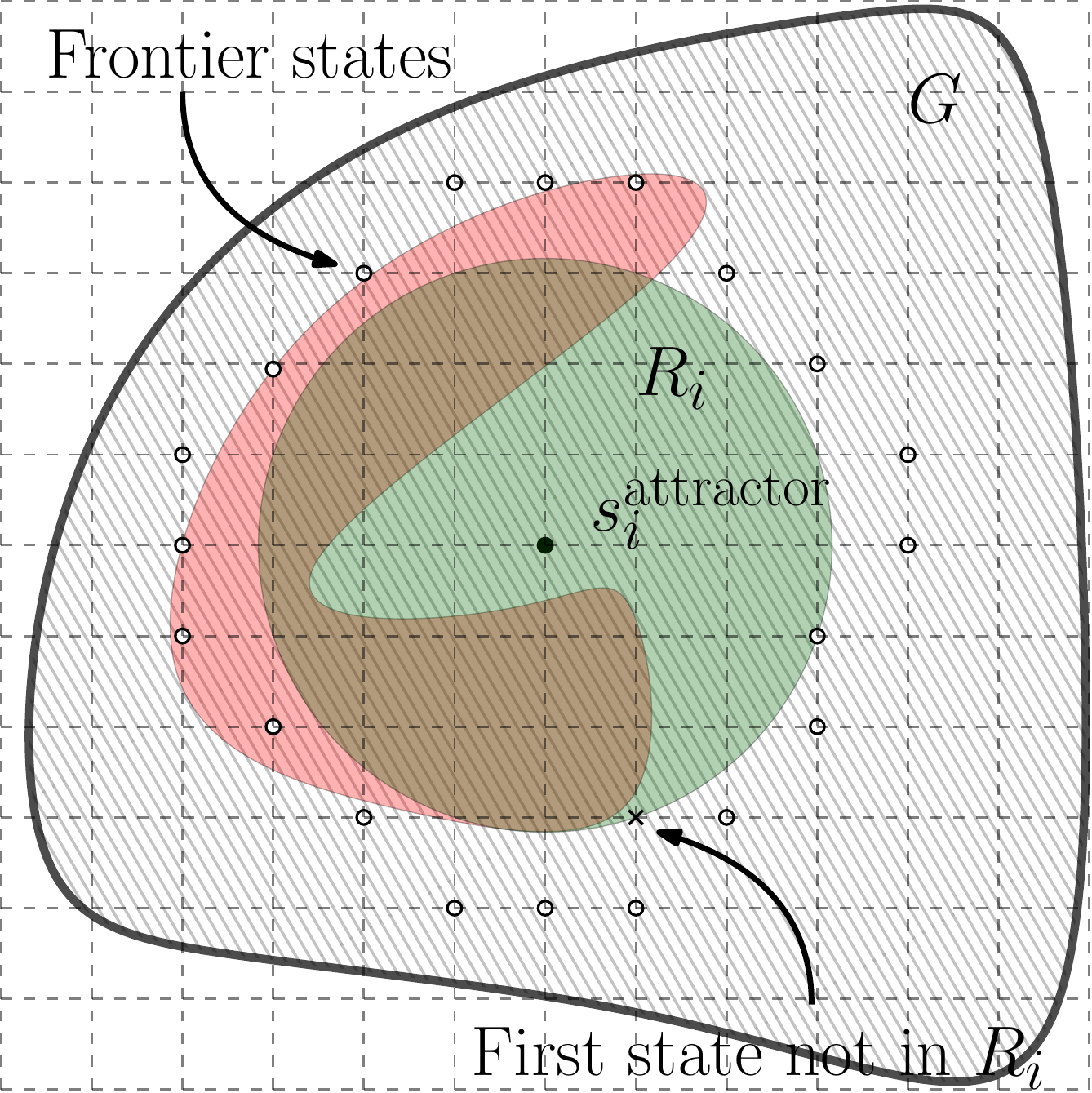}
  \caption{
  Visualization of Alg~\ref{alg:2}. Subregion $R_i$ (green) grown from $\sAttract$ in a goal region~$G$ (tiled grey) containing an obstacle (red).
  Frontier states and first state not in $R_i$ are depicted by circles and a cross, respectively.
}
    \label{fig:alg2}
\end{figure}

\subsubsection{Reachability Search}
The core of our planner lies in the way we compute the subregions (Alg.~\ref{alg:2} and Fig.~\ref{fig:alg2}) which we call a ``Reachability Search''. The algorithm maintains a set of \emph{reachable} states $S_{\text{reachable}}$ for which Property~\ref{property:1} holds.
As we will see, this will ensure that in the query phase, we can run a greedy search from any reachable state $s \in S_{\text{reachable}}$ and it will terminate in the attractor state. 
The following recursive definition formally captures the notion of a reachable state.

\vspace{2mm}
\begin{definition}
  Given some attractor state \sAttract, we say that a state $s \in G_\calS$ is reachable under some function $h(\cdot)$ with respect to \sAttract if either
  (i)~$s = \sAttract$ or
  (ii)~the greedy predecessor of $s$ with respect to $h(s,\sAttract)$ is reachable.
\end{definition}

The algorithm computes a subregion that covers the maximum number of reachable states that can fit into a hyper-ball defined by $h(s,\sAttract)$. 
The search maintains a priority queue OPEN ordered according to $h(s,\sAttract)$. Initially, the successors of $\sAttract$ are inserted in the OPEN (line~\ref{alg:2:OPEN}). For each expanded successor, if its valid greedy predecessor is in $S_{\text{reachable}}$, then the successor is also labeled as reachable (lines~\ref{alg:2:crit} and~\ref{alg:2:set}). 

The algorithm terminates when the search pops a state which is valid but does not have a greedy predecessor state in $S_{\text{reachable}}$ (line~\ref{alg:2:terminate}). Intuitively, this corresponds to  the condition when the reachability search exits an obstacle (see Fig.~\ref{fig:alg2}).
At termination, all the states within the boundary of radius $r_i$ (excluding the boundary) are reachable.

\begin{algorithm}[t]
\footnotesize
\caption{Reachability Search}\label{alg:2}

\begin{algorithmic}[1]
\Procedure{ComputeReachability}{$\sAttract$}
\State $S_{\text{reachable}} \leftarrow \{\sAttract\}$ \Comment{reachable set} \label{alg:2:reachable}
\State OPEN $\leftarrow \{$Succs($\sAttract$)$\}$  \Comment{key: $h(s,\sAttract)$} \label{alg:2:OPEN}
\State CLOSED $\leftarrow \emptyset$
\State $r_i \leftarrow 0$

\While{OPEN$\neq \emptyset$}
    \State $s \leftarrow$ OPEN.pop()
    \State insert $s$ in CLOSED
    \State $s'_g \leftarrow \argmin\limits_{s' \in \text{Preds}(s)} h(s', \sAttract)$ 
  \label{alg:2:greedy} 
 \Comment{greedy predecessor}
    \If {$s'_g$ $\in$ $S_{\text{reachable}}$ and Valid(edge(s,$s'_g$))}  \label{alg:2:crit}
        \State $S_{\text{reachable}} \leftarrow S_{\text{reachable}} \cup \{s\}$  \Comment{$s$ is greedy} \label{alg:2:set}
    \ElsIf {Valid($s$)} \label{alg:2:terminate}
        \State $r_i \leftarrow h(s, \sAttract)$ \label{alg:2:rad}
        \State \Return (OPEN, $r_i$)
    \EndIf
    \For {each $s' \in \text{Succs}(s) \cap G_\calS$} \label{alg:2:prun}
        \If {$s' \notin$ CLOSED}
            \State insert $s'$ in OPEN with priority $h(s', \sAttract)$
        \EndIf
    \EndFor
\EndWhile
\State $r_i \leftarrow h(s, \sAttract) + \epsilon$    \Comment{$\epsilon$ is a small positive constant}
\State \Return (OPEN, $r_i$)

\EndProcedure
\end{algorithmic}
\end{algorithm}

\begin{algorithm}[t]
\footnotesize
\caption{Find valid uncovered state}\label{alg:3b}

\begin{algorithmic}[1]
\Procedure{FindValidUncoveredState}{$\hat{s}$}
\State OPEN $\leftarrow \{\hat{s}\}$
\While{OPEN$\neq \emptyset$}
  \State $s \leftarrow$ OPEN.pop()
  \State insert $s$ in CLOSED
  \If {$\nexists R \in \calR$  s.t. $s \in R$ and Valid(s)}
    \State \Return ($\{s\}$, $h$($s$, $\hat{s}$))
  \EndIf
  \For {each $s' \in \{\text{Succs}(s) \cup \text{Preds}(s)\} \cap G_\calS$}
    \If {$s' \notin$ CLOSED}
        \State insert $s'$ in OPEN with priority $h(s', \sAttract)$
    \EndIf
  \EndFor
\EndWhile
\State $r = h$($s$, $\hat{s})) + \epsilon$    \Comment{$\epsilon$ is a small positive constant}
\State \Return ($\emptyset$, $r$))
\EndProcedure
\end{algorithmic}
\end{algorithm}

\subsubsection{Query Phase}
Given a query goal state $s_{\text{goal}} \in G_\calS$ 
our algorithm, detailed in Alg.~\ref{alg:3}, starts by finding a subregion $R_i \in \calR$ which covers it (line.~\ref{alg:3:covers}). Namely, a subregion~$R_i$ for which 
$h(s_{\text{goal}}, \sAttract) < r_i$.
We then run a greedy search starting from $s_{\text{goal}}$ by iteratively finding for each state $s$ the predecessor with the minimum heuristic $h(s, \sAttract)$ value until the search reaches \sAttract (lines~\ref{alg:3:greedy-call} and~\ref{alg:3:greedy-call-start}-~\ref{alg:3:greedy-call-end}). 
The greedy path~$\pi_g$ is then appended to the corresponding precomputed path~$\pi_i \in \calL$ (line~\ref{alg:3:return}). 
Note that at no point  do we need to perform collision checking in the query phase (given the fact that the environment is static).

\begin{algorithm}[t]
\footnotesize
\caption{Query}\label{alg:3}

\begin{algorithmic}[1]
\Procedure{FindGreedyPath}{$s_1, s_2$}
  \label{alg:3:greedy-call-start}
  \State $s_{\text{curr}} \leftarrow s_2$; \hspace{2mm} $\pi \leftarrow \emptyset$
  \While{$s_{\text{curr}} \neq s_1$}
    \State $\pi \leftarrow \pi \cdot s_{\text{curr}}$
    \Comment{append current state to path}
      \State $s_{\text{curr}} \leftarrow \argmin\limits_{s \in \text{Preds}(s_{\text{curr}})} h(s, s_1)$  \Comment{greedy predecessor}
    \EndWhile
    \State \Return \textsc{Reverse} ($\pi$)   \Comment{reverse to get a path from $s_1$ to $s_2$}
\EndProcedure
\label{alg:3:greedy-call-end}

\vspace{2mm}

\Procedure{Compute path}{$\sGoal$}
  \For {each $R_i \in \calR$}
    \If {$h(s_{\text{goal}}, \sAttract) < r_i$}
    \Comment{$R_i$ covers $\sGoal$}
    \label{alg:3:covers}
      \State $\pi_g \leftarrow$ \textsc{FindGreedyPath} ($\sAttract, \sGoal$)
      \label{alg:3:greedy-call}
      \State \Return $\pi_i \cdot \pi_g$  \Comment{append $\pi_g$ to~$\pi_i \in \calL$}
      \label{alg:3:return}
    \EndIf
  \EndFor

\EndProcedure
\end{algorithmic}
\end{algorithm}

\subsection{Implementation details}
\label{subsec:impl}

\subsubsection{Ordering subregions for faster queries}
Recall that in the query phase we iterate over all subregions to find one that covers \sGoal. 
In the worst case we will have to go over all subregions.
However,  our algorithm typically covers most of the goal region $G_\calS$ using a few very large subregions (namely, with a large radii~$r_i$) and the rest of $G_\calS$ is covered by a number of very small subregions.

Thus, if we order our subregions (offline) according to their corresponding radii, there is a higher chance of finding a covering subregion faster. While this optimization does not change our worst-case analysis (Sec.~\ref{sec:analysis}), it speeds up the query time in case the number of subregions is very large.

\subsubsection{Efficiently constructing $\calL$}
Constructing paths to the attractor states (Alg.~\ref{alg:1}, line~\ref{alg:1:pp}) can be done using any motion planning algorithm $\calP$.
In our implementation we chose to use RRT-Connect~\cite{KL00}.
Interestingly, this step dominates the running time of the preprocessing step.
To improve the preprocessing time, we initially set a small timeout for RRT-Connect to compute a path from an attractor state $\sStart$ to $\sAttract$.
Attractor states for which RRT-Connect fails to find a path to $\sAttract$ are marked as \textit{bad} attractors and we do not grow the subregions from them. 
These, so-called \textit{bad} attractors are discarded when other subregions cover them.

When Alg.~\ref{alg:1} terminates, we reload the valid list $V$ with the remaining bad attractors and rerun Alg.~\ref{alg:1} but this time with a large timeout for RRT-Connect. 
We can also increase the timeout with smaller increments and run Alg.~\ref{alg:1} iteratively until there are no more bad attractors (assuming that there exists a solution for each goal state $\in$ $G_\calS$).

\subsubsection{Pruning redundant subregions}
To reduce the number of precomputed subregions we remove redundant ones after the Alg.~\ref{alg:1} terminates. 
In order to do that, we iterate through all the subregions and remove the ones which are fully contained within any other subregion. 
This step reduces both the query complexity (see Sec.~\ref{sec:analysis}) as well as the memory consumption.

\section {Analysis}
\label{sec:analysis}
In this section we formally prove that 
our algorithm is correct (Sec.~\ref{subsec:correct}) and 
analyze its computational complexity, completeness and bound on solution quality (Sec.~\ref{subsec:complexity},~\ref{subsec:completeness} and ~\ref{subsec:quality} respectively).

\subsection{Correctness}
\label{subsec:correct}
To prove that our algorithm is correct, we show that indeed all states of every subregion are reachable and we can identify if a state belongs to a subregion using its associated radius.
Furthermore, we show that a path obtained by a greedy search within any subregion is valid and that all states in $G_\calS$ are covered by some subregion.
These notions are captured by the following set of lemmas.

\vspace{2mm}
\begin{lemma}
\label{lemma:reachable-1}
Let $S_{\text{reachable}}$ be the set of states computed by Alg.~\ref{alg:2} for some attractor vertex \sAttract.
Every state $s \in S_{\text{reachable}}$ is reachable with respect to \sAttract.
\end{lemma}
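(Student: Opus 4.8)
The plan is to prove the lemma by strong induction on the order in which states are inserted into $S_{\text{reachable}}$ during a run of Alg.~\ref{alg:2}. The first thing I would establish is the structural fact that makes this induction well-posed: $S_{\text{reachable}}$ is initialized once (line~\ref{alg:2:reachable}) and thereafter only ever grows, since states are added on line~\ref{alg:2:set} but never removed. Hence the instant at which each state first enters the set is well-defined, and I can enumerate the members as $s_0, s_1, s_2, \ldots$ in the order they are added. This total order is precisely the scaffolding the recursive definition of a reachable state is designed to climb.

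For the base case, the initialization on line~\ref{alg:2:reachable} gives $s_0 = \sAttract$, and clause~(i) of the definition of reachability declares $\sAttract$ reachable with respect to itself, so the base case is immediate.

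For the inductive step, I would assume $s_0, \ldots, s_{k-1}$ are all reachable with respect to \sAttract and consider $s_k$. The only way a state enters $S_{\text{reachable}}$ is via line~\ref{alg:2:set}, which fires only when the guard on line~\ref{alg:2:crit} holds; therefore, at the moment $s_k$ is inserted, its greedy predecessor $s'_g = \argmin_{s' \in \text{Preds}(s_k)} h(s', \sAttract)$ already lies in $S_{\text{reachable}}$. Because $s_k$ is by construction the $k$-th state added, the set at that instant is contained in $\{s_0, \ldots, s_{k-1}\}$, so $s'_g$ is one of these earlier members and is reachable by the inductive hypothesis. Clause~(ii) of the definition then yields that $s_k$ is reachable, closing the induction.

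The part I expect to require the most care is certifying that the predecessor the algorithm selects on line~\ref{alg:2:greedy} really is \emph{the} greedy predecessor invoked by the definition of reachability, i.e. that the two notions coincide. This rests on Assumptions~\ref{assum:2} and~\ref{assum:3}: weak-monotonicity together with the tie-breaking rule guarantee the $\argmin$ is unique, and the convexity clause guarantees it lies in $G_\calS$, exactly as noted following Definition~\ref{def:greedy-suc}. Alongside this I would record the minor bookkeeping point that every state handled by the search lies in $G_\calS$ (the expansion on line~\ref{alg:2:prun} intersects with $G_\calS$), so that the reachability definition, which is stated only for states of $G_\calS$, genuinely applies to each $s_k$. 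Once these two identifications are pinned down, the induction above goes through essentially without further calculation.
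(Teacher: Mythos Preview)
Your proposal is correct and follows essentially the same argument as the paper: an induction over the order in which states are inserted into $S_{\text{reachable}}$, with the base case $\sAttract$ and the inductive step appealing to the fact that a state is added only when its (unique) greedy predecessor is already in the set. Your write-up is more explicit about the bookkeeping (monotone growth of the set, membership in $G_\calS$, well-definedness of the greedy predecessor via Assumptions~\ref{assum:2} and~\ref{assum:3}), but the structure and key idea are identical.
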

\begin{proof}
The proof is constructed by an induction over the states added to $S_{\text{reachable}}$.
The base of the induction is trivial as the first state added to $S_{\text{reachable}}$  is \sAttract (line~\ref{alg:2:reachable}) which, by definition, is reachable with respect to \sAttract.
A state $s$ is added to $S_{\text{reachable}}$ only if its greedy predecessor is in $S_{\text{reachable}}$ (line~\ref{alg:2:greedy}) which by the induction hypothesis is reachable with respect to \sAttract.
This implies by definition that $s$ is reachable with respect to \sAttract.
Note that this argument is true because the greedy predecessor of every state is unique (Assumption~\ref{assum:3}).
\end{proof}

\begin{lemma}
\label{lemma:reachable-2}
Let $S_{\text{reachable}}$ be the set of states computed by Alg.~\ref{alg:2} for some attractor vertex \sAttract.
A state $s$ is in $S_{\text{reachable}}$ iff $h(s, \sAttract) < r_i$.
\end{lemma}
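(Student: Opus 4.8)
The plan is to prove the two directions of the biconditional separately, with the backbone being that Alg.~\ref{alg:2} pops states from OPEN in non-decreasing order of the key $h(\cdot,\sAttract)$, and that by Assumption~\ref{assum:3} ties are resolved by a fixed total order, so I may speak of a strict ordering of expansions and treat ``$h < r_i$'' as a threshold in that total order. Let $s^\star$ denote the state whose expansion triggers the return on line~\ref{alg:2:terminate}, so that $r_i = h(s^\star,\sAttract)$; note $s^\star$ is itself never inserted into $S_{\text{reachable}}$. (The alternative exit, where OPEN empties and $r_i$ is set to the last key plus $\epsilon$, is the easier case and will be handled by the same bookkeeping, since then every popped state was added to $S_{\text{reachable}}$ and all have key strictly below $r_i$.)

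For the forward direction ($s \in S_{\text{reachable}} \Rightarrow h(s,\sAttract) < r_i$): every state other than $\sAttract$ enters $S_{\text{reachable}}$ only upon being popped (line~\ref{alg:2:set}), and this can happen only strictly before $s^\star$ is popped, since popping $s^\star$ immediately returns. Because OPEN is ordered by the total order of Assumption~\ref{assum:3}, any state popped before $s^\star$ precedes it in that order, hence has $h(s,\sAttract) < h(s^\star,\sAttract) = r_i$; the base case $\sAttract$ (line~\ref{alg:2:reachable}) has $h = 0 < r_i$. This yields one inclusion.

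For the backward direction ($h(s,\sAttract) < r_i \Rightarrow s \in S_{\text{reachable}}$) I would induct on the position of $s$ in the total order. First I would show every such $s$ is actually inserted into OPEN and popped before $s^\star$: its greedy predecessor $s'_g$ is unique and lies in $G_\calS$ (Def.~\ref{def:greedy-suc} together with Assumptions~\ref{assum:2} and~\ref{assum:3}), and by weak monotonicity $h(s'_g,\sAttract) \le h(s,\sAttract) < r_i$, so $s'_g$ strictly precedes $s$. Since the graph is undirected, $s \in \text{Succs}(s'_g)$, so expanding $s'_g$ (which, by the inductive hypothesis, happens before $s$ and adds $s'_g$ to $S_{\text{reachable}}$) places $s$ into OPEN. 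Chasing the greedy-predecessor chain, whose keys strictly decrease in the total order, bottoms out at $\sAttract$, giving connectivity. When $s$ is finally popped—before $s^\star$, hence not triggering the return—its predecessor $s'_g$ is already reachable, so the guard on line~\ref{alg:2:crit} fires and $s$ is added to $S_{\text{reachable}}$.

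The hard part will be making this second direction airtight exactly at the termination boundary. The guard on line~\ref{alg:2:crit} can fail for two distinct reasons: the greedy predecessor is not reachable (which I rule out by the inductive hypothesis, since $h(s'_g,\sAttract) < r_i$), or the connecting edge is invalid. I must argue that $s^\star$ is genuinely the \emph{first} state in the total order at which the guard fails, so that everything strictly preceding it—precisely the states with smaller key, modulo tie-breaking—passes the guard and is absorbed into $S_{\text{reachable}}$. This requires a careful reading of how edge-validity and state-validity interact in the guard versus the termination test on line~\ref{alg:2:terminate}, and I expect this to be the only delicate point; the rest follows from the monotone expansion order, the uniqueness of the greedy predecessor (Assumption~\ref{assum:3}), and Lemma~\ref{lemma:reachable-1} to keep the meaning of ``reachable'' consistent throughout the induction.
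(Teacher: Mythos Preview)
Your approach mirrors the paper's: both directions rest on the monotone pop order of OPEN and the termination rule on line~\ref{alg:2:terminate}. The paper's proof is terser and less careful than yours, but the skeleton is the same.

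Two places where your argument (and, to be fair, the paper's) is loose.

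In the forward direction you write that a state popped before $s^\star$ ``hence has $h(s,\sAttract) < h(s^\star,\sAttract)$.'' Preceding $s^\star$ in the total order of Assumption~\ref{assum:3} only yields $h(s,\sAttract) \le h(s^\star,\sAttract)$; a state tied with $s^\star$ on $h$ but earlier in the tie-break can sit in $S_{\text{reachable}}$ with $h = r_i$, so the strict numerical inequality need not hold. The paper's proof has the same slack and simply asserts monotone growth of $r_i$.

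Your suspicion about the ``delicate point'' is well founded, and in fact the backward direction as literally stated fails for arbitrary states. If $s$ is \emph{invalid} and the edge to its greedy predecessor is invalid, then when $s$ is popped the guard on line~\ref{alg:2:crit} fails, the test on line~\ref{alg:2:terminate} also fails (since $s$ is not valid), and the loop continues without adding $s$ to $S_{\text{reachable}}$; such an $s$ can easily satisfy $h(s,\sAttract) < r_i$. So $s^\star$ is \emph{not} in general the first state at which the guard fails, which is exactly what you said you needed. What does hold, and what Lemma~\ref{lemma:greedy} actually requires downstream, is the backward implication restricted to \emph{valid} states: any valid $s$ popped before $s^\star$ that failed the guard would itself trigger the return on line~\ref{alg:2:terminate}, contradicting the choice of $s^\star$. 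That single observation closes the gap you flagged; the paper does not make it explicit either.
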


\begin{proof}
Alg.~\ref{alg:2} orders the nodes to be inserted to $S_{\text{reachable}}$ according to $h(s, \sAttract)$ (line~\ref{alg:2:OPEN}).
As our heuristic function is weakly monotonic (Assumption~\ref{assum:2}), the value of $r_i$ monotonically increases as the algorithm adds states to $S_{\text{reachable}}$ (line~\ref{alg:2:rad}).
Thus, for every state $s \in S_{\text{reachable}}$, we have that $h(s, \sAttract) < r_i$.

For the opposite direction, assume that there exists a state $s \notin S_{\text{reachable}}$ such that $h(s, \sAttract) < r_i$.
This may be because  Alg.~\ref{alg:2} terminated due to a node $s'$ that was popped from the open list with 
$h(s', \sAttract) \leq h(s, \sAttract)$.
However, using the fact that our heuristic function is weakly monotonic (Assumption~\ref{assum:2}) we get a contradiction to the fact that $h(s, \sAttract) < r_i$.
Alternatively, this may be because we prune away states that are in $G_\calS$  (Alg.~\ref{alg:2} line~\ref{alg:2:prun}).
However, using the fact that our goal region is convex with respect to $h$ (Assumption~~\ref{assum:2}), this cannot hold.
\end{proof}

\begin{lemma}
\label{lemma:greedy}
Let $R_i \in \calR$ be a subregion computed by Alg.~\ref{alg:2}.
for some attractor vertex \sAttract.
A greedy search with respect to $h(s, \sAttract)$  starting from any valid state $s \in R_i$ is complete and valid.
\end{lemma}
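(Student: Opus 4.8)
The plan is to prove the two claimed properties---completeness (the greedy search terminates at \sAttract) and validity (the resulting path is collision-free)---separately, leaning on the two preceding lemmas. Throughout I fix the attractor \sAttract and write $h(\cdot)$ for $h(\cdot, \sAttract)$.

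First I would establish completeness. Let $s \in R_i$ be a valid starting state. By Lemma~\ref{lemma:reachable-2}, membership in $R_i$ (i.e.\ $h(s, \sAttract) < r_i$) is equivalent to $s \in S_{\text{reachable}}$, so the starting state is reachable. The greedy search (Alg.~\ref{alg:3}, \textsc{FindGreedyPath}) repeatedly replaces the current state by its unique greedy predecessor with respect to $h$. The key observation is that a single greedy step preserves reachability: by the definition of a reachable state, if $s \in S_{\text{reachable}}$ and $s \neq \sAttract$, then the greedy predecessor of $s$ is itself reachable. Hence every state visited by the greedy search lies in $S_{\text{reachable}}$. To argue the search actually \emph{terminates} at \sAttract rather than cycling, I would use weak monotonicity (Assumption~\ref{assum:2}): each greedy step strictly decreases $h$ (strictness follows because the greedy predecessor is unique by the tie-breaking rule of Assumption~\ref{assum:3}, so a non-decrease would force the successor to equal the current state, impossible for distinct predecessor), and since $h \geq 0$ takes finitely many values over the finite lattice restricted to $S_{\text{reachable}}$, the sequence cannot cycle and must reach the unique state with $h = 0$, namely \sAttract.

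Next I would establish validity, i.e.\ that every edge traversed by the greedy search is collision-free and every visited state is valid. Here the engine is the construction in Alg.~\ref{alg:2}: a state $s$ is added to $S_{\text{reachable}}$ only at line~\ref{alg:2:set}, and the guard at line~\ref{alg:2:crit} requires both that its greedy predecessor $s'_g$ already belong to $S_{\text{reachable}}$ \emph{and} that the edge between $s$ and $s'_g$ be valid. Thus, by an induction mirroring Lemma~\ref{lemma:reachable-1}, for every $s \in S_{\text{reachable}} \setminus \{\sAttract\}$ the greedy edge from $s$ to its greedy predecessor is collision-free, and that predecessor is again in $S_{\text{reachable}}$. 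Combining this with the completeness argument: the greedy search from $s$ visits only states in $S_{\text{reachable}}$, and each consecutive pair is exactly a (state, greedy-predecessor) pair whose connecting edge was certified valid during the reachability search. Therefore the entire path the greedy search produces is collision-free, establishing validity.

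The main obstacle is the interplay between the two assumptions in the termination/no-pruning argument. Weak monotonicity guarantees that a greedy step never increases $h$, but to get a \emph{strict} decrease---and hence guaranteed termination without revisiting---I must invoke the uniqueness of the greedy predecessor (Assumption~\ref{assum:3}) to rule out a greedy predecessor equal to the current state. Equally delicate is ensuring the greedy search stays inside $G_\calS$, since $h$ and the reachability set are only defined there; this is precisely where convexity of $G_\calS$ with respect to $h$ (Assumption~\ref{assum:2}) is needed, exactly as in the second half of Lemma~\ref{lemma:reachable-2}, guaranteeing that the greedy predecessor of any state in $G_\calS$ again lies in $G_\calS$ so that the search never steps outside the region on which reachability was computed.
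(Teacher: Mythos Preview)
Your proof follows essentially the same route as the paper: invoke Lemma~\ref{lemma:reachable-2} to place the start state in $S_{\text{reachable}}$, use the recursive structure of reachability (equivalently Lemma~\ref{lemma:reachable-1}) to show by induction that the greedy search stays inside $S_{\text{reachable}}$, and appeal to the guard at line~\ref{alg:2:crit} for edge validity. The paper's own proof is considerably terser and leaves termination implicit.

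One small gap in your termination argument: uniqueness of the greedy predecessor (Assumption~\ref{assum:3}) does \emph{not} by itself force a strict decrease in $h$. Weak monotonicity only gives $h(s'_g) \le h(s)$; the tie-breaking rule makes $s'_g$ unique among $\text{Preds}(s)$ but does not preclude $h(s'_g) = h(s)$ with $s'_g \neq s$. The cleaner fix is to order states lexicographically by $(h,\text{tie-break index})$ and observe that this pair strictly decreases at every greedy step, which rules out cycles on the finite set $S_{\text{reachable}}$. The paper does not address this point at all, so your version is already more careful than the original.
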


\begin{proof}
Given a state $s \in R_i$, we know that $s \in S_{\text{reachable}}$ (Lemma~\ref{lemma:reachable-2})
and that it is reachable with respect to \sAttract (Lemma~\ref{lemma:reachable-1}).
It is easy to show (by induction) that any greedy search starting at a state $S_{\text{reachable}}$ will only output states in $S_{\text{reachable}}$.
Furthermore, a state is added to $S_{\text{reachable}}$ only if the edge connecting to its greedy predecessor is valid (line~\ref{alg:2:crit}).
Thus, if $s\in S_{\text{reachable}}$ is valid, the greedy search with respect to $h(s, \sAttract)$  starting from $s$ is complete and valid.
\end{proof}

\begin{lemma}
\label{lemma:coverage}
At the end of Alg.~\ref{alg:1}, every state $s \in G_\calS$ is covered by some subregion $R \in \calR$.
\end{lemma}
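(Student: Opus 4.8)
The plan is to prove the claim in two stages: first, that \textsc{PreprocessRegion} (Alg.~\ref{alg:1}) terminates with both frontier sets $V$ and $I$ empty; and second, that at termination every valid state of $G_\calS$ lies in some subregion of $\calR$, with invalid states accounted for by the auxiliary set $\hat{\calR}$ (which is all that completeness actually requires). For termination I would note that $G_\calS$ is finite, that every subregion added to $\calR$ is centred at a state that was uncovered when popped (line~\ref{alg:1:discard}) and hence distinct from all previous attractors, and similarly that every element of $\hat{\calR}$ is centred at a previously-unconsidered invalid state. Since each pop removes one state and only finitely many states can ever serve as a centre, the nested loops halt, and the outer \textbf{while} guard forces $V=I=\emptyset$ on exit.

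The second stage I would split into a covering invariant and a flood-fill argument. For the invariant, call a state \emph{touched} if it is ever inserted into $V$ or $I$. Because both sets are empty at termination, every touched state was eventually popped; a valid state popped from $V$ is, by line~\ref{alg:1:discard}, either already in some $R\in\calR$ or promoted to an attractor, in which case it becomes the centre of a freshly grown subregion and is covered by it (its centre is reachable by construction, Lemma~\ref{lemma:reachable-1}). Since only valid frontier states enter $V$ and only invalid ones enter $I$ (lines~\ref{alg:1:insert_v}--\ref{alg:1:insert_i}), it therefore suffices to show that \emph{every} valid state of $G_\calS$ is touched.

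For the flood-fill I would exploit that in our setting the lattice is undirected, so $\text{Preds}(s)=\text{Succs}(s)$ and graph-neighbours coincide with successors. Assume for contradiction that some valid $s^{\ast}\in G_\calS$ is never touched, and fix a lattice path from the initial seed (which is touched) to $s^{\ast}$; let $u_j$ be the last touched state on it and $u_{j+1}$ the first untouched one. If $u_j$ is valid it is covered by some $R_m$ and, by Lemma~\ref{lemma:reachable-2}, lies in $S_{\text{reachable}}$; then during the construction of $R_m$ its successors---including its neighbour $u_{j+1}$---were pushed onto \textsc{Open} (line~\ref{alg:2:prun}) and hence onto $V$ or $I$, making $u_{j+1}$ touched, a contradiction. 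The genuinely delicate case, which I expect to be the main obstacle, is when $u_{j+1}$ lies just beyond an obstacle: the reachability search can exit at a valid boundary state (the terminating state of line~\ref{alg:2:terminate}), which by the strict radius test of Lemma~\ref{lemma:reachable-2} is \emph{not} covered and is not returned in \textsc{Open}. Covering such states is exactly the role of the $I$-queue together with \textsc{FindValidUncoveredState} (Alg.~\ref{alg:3b}), which floods through invalid states using both successors and predecessors to surface a valid uncovered state lying behind the obstacle and reinsert it into $V$.

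The crux is thus to show that for every valid state separated from the already-covered region only by invalid states of $G_\calS$, some invalid state on the connecting path is deposited in $I$ and, when popped, triggers an Alg.~\ref{alg:3b} search that reaches the hidden valid state before $\hat{\calR}$ prunes the exploration. Establishing this requires (i)~a careful accounting of which boundary states of a subregion are returned in \textsc{Open} and routed into $I$, and (ii)~the connectivity assumption that $G_\calS$, viewed as an undirected lattice including invalid states, is connected, so that no valid ``island'' is unreachable---this is what the algorithm's handling of a not-fully-connected valid region really relies on, since bridging occurs through invalid states rather than valid ones. Once (i) and (ii) are in place the minimal-counterexample contradiction closes, yielding that every valid state is touched and hence covered by $\calR$, while every invalid state is covered by $\calR\cup\hat{\calR}$, which is precisely what Property~\ref{property:2} demands.
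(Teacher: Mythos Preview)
Your proposal follows essentially the same route as the paper---a contradiction argument based on an uncovered state adjacent to the covered region, arguing it would have entered a frontier list and been processed---but is considerably more careful. The paper's proof is a three-sentence sketch: pick an uncovered $s\in G_\calS$ with a covered neighbor; if $s$ is valid it would have been in $V$ and either promoted to an attractor (line~\ref{alg:1:attract}) or covered by an existing subregion; ``a similar argument holds if $s$ is not valid.'' It does not separately argue termination, does not explicitly invoke connectivity of $G_\calS$, and does not address the subtlety you flag---that the valid state triggering termination at line~\ref{alg:2:terminate} of Alg.~\ref{alg:2} is neither covered by $R_i$ (its $h$-value equals $r_i$, not $<r_i$) nor returned in \textsc{Open}. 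Your identification of this as the crux, and your proposed resolution via the $I$-queue and Alg.~\ref{alg:3b}, is a genuine refinement over the paper's argument; the paper simply does not engage with this case. You are also right that the lemma as stated conflates coverage by $\calR$ and by $\calR\cup\hat{\calR}$; the paper's proof implicitly relies on the latter for invalid states, as you do.
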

\begin{proof}
Assume that this does not hold and let $s \in G_\calS$ be a state that is not covered by any subregion but has a neighbor (valid or invalid) that is covered.
If $s$ is valid, then it would have been in the valid frontier states $V$ and either been picked to be an attractor state (line~\ref{alg:1:attract}) or covered by an existing subregion (Alg.~\ref{alg:2}).
A similar argument holds if $s$ is not valid.
\end{proof}

From the above we can immediately deduce the following corollary:

\vspace{2mm}

\begin{cor}
  After preprocessing the goal region~$G_\calS$ (Alg.~\ref{alg:1} and~\ref{alg:2}), in the query phase we can compute a valid path for any valid state $s \in G_\calS$ using Alg.~\ref{alg:3}.
\end{cor}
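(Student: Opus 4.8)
The plan is to chain together the four lemmas already established, since this corollary asserts nothing more than that the query procedure (Alg.~\ref{alg:3}) succeeds once preprocessing has finished. I would fix an arbitrary valid query state $s \in G_\calS$ and trace the execution of Alg.~\ref{alg:3} on it, arguing that each of the three operations it performs---locating a covering subregion, running the greedy search, and concatenating with a precomputed path---behaves as intended.

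First I would argue that a covering subregion exists and is found. By Lemma~\ref{lemma:coverage}, every state of $G_\calS$, and in particular the valid state $s$, is covered by some $R_i \in \calR$. By Lemma~\ref{lemma:reachable-2}, membership $s \in R_i$ is equivalent to the inequality $h(s, \sAttract) < r_i$, which is exactly the predicate tested in the loop of Alg.~\ref{alg:3} (line~\ref{alg:3:covers}). Hence the loop is guaranteed to identify at least one covering subregion; whichever one it encounters first suffices, so I need not reason about which $R_i$ is selected.

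Next I would establish that the returned path is valid. Having fixed a covering $R_i$, Lemma~\ref{lemma:reachable-2} together with Lemma~\ref{lemma:reachable-1} gives that $s$ lies in $S_{\text{reachable}}$ and is reachable with respect to \sAttract. Lemma~\ref{lemma:greedy} then guarantees that the greedy search invoked by \textsc{FindGreedyPath}$(\sAttract, s)$ is both complete---it terminates at \sAttract---and valid---every edge it traverses is collision free---yielding a collision-free path $\pi_g$ from \sAttract to $s$. The precomputed path $\pi_i \in \calL$ is, by construction in Alg.~\ref{alg:1} (line~\ref{alg:1:pp}), a collision-free path from \sStart to \sAttract produced by the planner $\calP$. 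Since $\pi_i$ ends and $\pi_g$ begins at the common state \sAttract, the concatenation $\pi_i \cdot \pi_g$ is a well-defined collision-free path from \sStart to $s$, precisely what Alg.~\ref{alg:3} returns.

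The argument is mostly bookkeeping, so I do not anticipate a genuinely hard step; the one point requiring care is the concatenation, where I must confirm that joining two independently collision-free paths at the shared endpoint \sAttract introduces no invalid edge. This holds because the two sub-paths meet exactly at \sAttract and because the greedy edges are validated in the reverse (predecessor) direction, consistent with the undirected-graph assumption that lets $\pi_g$ be traversed from \sAttract to $s$. The only other subtlety is ensuring the query predicate matches the coverage notion exactly, which is why invoking Lemma~\ref{lemma:reachable-2} rather than re-deriving the radius relationship is essential.
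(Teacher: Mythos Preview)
Your proposal is correct and takes essentially the same approach as the paper: the paper presents the corollary as an immediate consequence of Lemmas~\ref{lemma:reachable-1}--\ref{lemma:coverage} with no further argument, and you have simply spelled out the chaining explicitly. Your added care about the concatenation at \sAttract and the match between the coverage predicate and the query test (via Lemma~\ref{lemma:reachable-2}) is more detail than the paper provides, but entirely in the same spirit.
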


\textbf{Remark}
We can relax Assumption~\ref{assum:2} in two ways.
The first is by explicitly tracking states not in the goal region instead of pruning them away (Alg.~\ref{alg:2} line~\ref{alg:2:prun}).
Unfortunately, this may require the algorithm to store many states not in $G_\calS$  which may be impractical (recall that Assumption~\ref{assum:1} only states that we can exhaustively store in memory the states in the goal region).
An alternative, more practical way, to relax Assumption~\ref{assum:2}  is by terminating the search when we  encounter a state not in the goal region.
This may cause the algorithm to generate much more subregions (with smaller radii) which may increase the memory footprint and the preprocessing times.

\subsection{Time Complexity of Query Phase}
\label{subsec:complexity}
The query time comprises of 
(i)~finding the containing subregion $R_i$ 
and
(ii)~running the greedy search to $\sAttract$.
Step~(i) requires iterating over all subregions (in the worst case) which takes $O(|\calR|)$ steps while 
step~(ii) requires expanding the states along the path from $\sGoal$ to $\sAttract$ which requires~$O(\calD)$ expansions where $\calD$ is the depth (maximal number of expansions of a greedy search) of the deepest subregion. 
For each expansion we need to find the greedy predecessor, considering at most $b$ predecessors, where $b$ is the maximal branching factor of our graph.
We can measure the depth of each subregion in Alg.~\ref{alg:2} by keeping track of the depth of each expanded state from the root i.e., $\sAttract$. Hence, overall the query phase takes $O(|\calR| + \calD \cdot b)$ operations. The maximal query time can also be empirically profiled after the preprocessing phase.

\textbf{Remark:}
Note that we can also bound the number of expansions required for the query phase by bounding the maximum depth of the subregions. We can do that by terminating Alg.~\ref{alg:2} when the $R_i$ reaches the maximum depth or if the existing termination condition (line ~\ref{alg:2:terminate}) is satisfied.
Having said that, this may come at the price of increasing the number of subregions.

\subsection{Algorithm Completeness}
\label{subsec:completeness}
Our method generates plans by stitching together a path from the library $\calL$ (computed offline using some planner~$\calP$), and a path computed online which is returned by the greedy search on a discretized graph $G_\calS$. For the former, our method simply inherits the completeness properties of the planner $\calP$, whereas for the latter, our method is resolution complete; it follows from the correctness discussion (Section~\ref{subsec:correct}). 

\subsection{Bound on Solution Quality}
\label{subsec:quality}
Let the function $c(\cdot)$ denote the cost of a path and $c^*(\cdot)$ denote the cost of an optimal path. Assuming that we precompute each path $\pi_i \in \calL$ with the optimal cost $c^*(\pi_i)$, from the triangle inequality it can be trivially shown that the quality of complete path $\pi$ computed by our method has an additive suboptimality bound; i.e.,
 $c(\pi) - c^*(\pi) < 2 * c(\pi_g)$,
where $c(\pi_g)$ is the cost of the greedy path from the attractor to the goal state.

\section{Evaluation}
\label{sec:eval}
\begin{table*}[t]
\centering
     \resizebox{1.8\columnwidth}{!}{%
        \begin{tabular}{ l | c c c c c}
           & PRM (4T) & MQ-RRT (4T) & E-graph & RRT-Connect & \textbf{Our method} \\
         \hline
         Planning time [ms]& 21.7 (59.6) & 21.2 (35.5) & 497.8 (9678.5) & 1960 (9652) & \textbf{1.0 (1.6)}\\
         Success rate [$\%$]& 86 & 69.75 & 76.5 & 83.8 & \textbf{100}\\
         Memory usage [Mb] & 1,828 & 225.75 & \textbf{2.0} & - & {7.8}
        \end{tabular}
    }
    \caption{Experimental results comparing our method with other single- and multi-query planners tested on Intel® Core i7‐5600U (2.6GHz) machine with 16GB RAM. The table shows the mean/worst-case planning times, success rates and memory usage for our method and for other multi-query planners preprocessed with quadruple the time that our method takes in precomputation (T = 1,445 seconds). Note that the worst-case time for our method shown in these results ($\sim$1.6 millisecond) is the empirical one and not the computed provable time bound which is 3 milliseconds (on our machine) for this environment.
    Results of sampling-based planners are averaged over 200 uniformly sampled queries. For the sampling-based planners the results were averaged over 4 trials (for the same set of 200 queries) with different random number generation seeds.}
    \label{tab:stats}
\end{table*}

\begin{figure*}[!ht]
 \subfloat[\label{fig:success}]{%
   \includegraphics[width=0.47\textwidth]{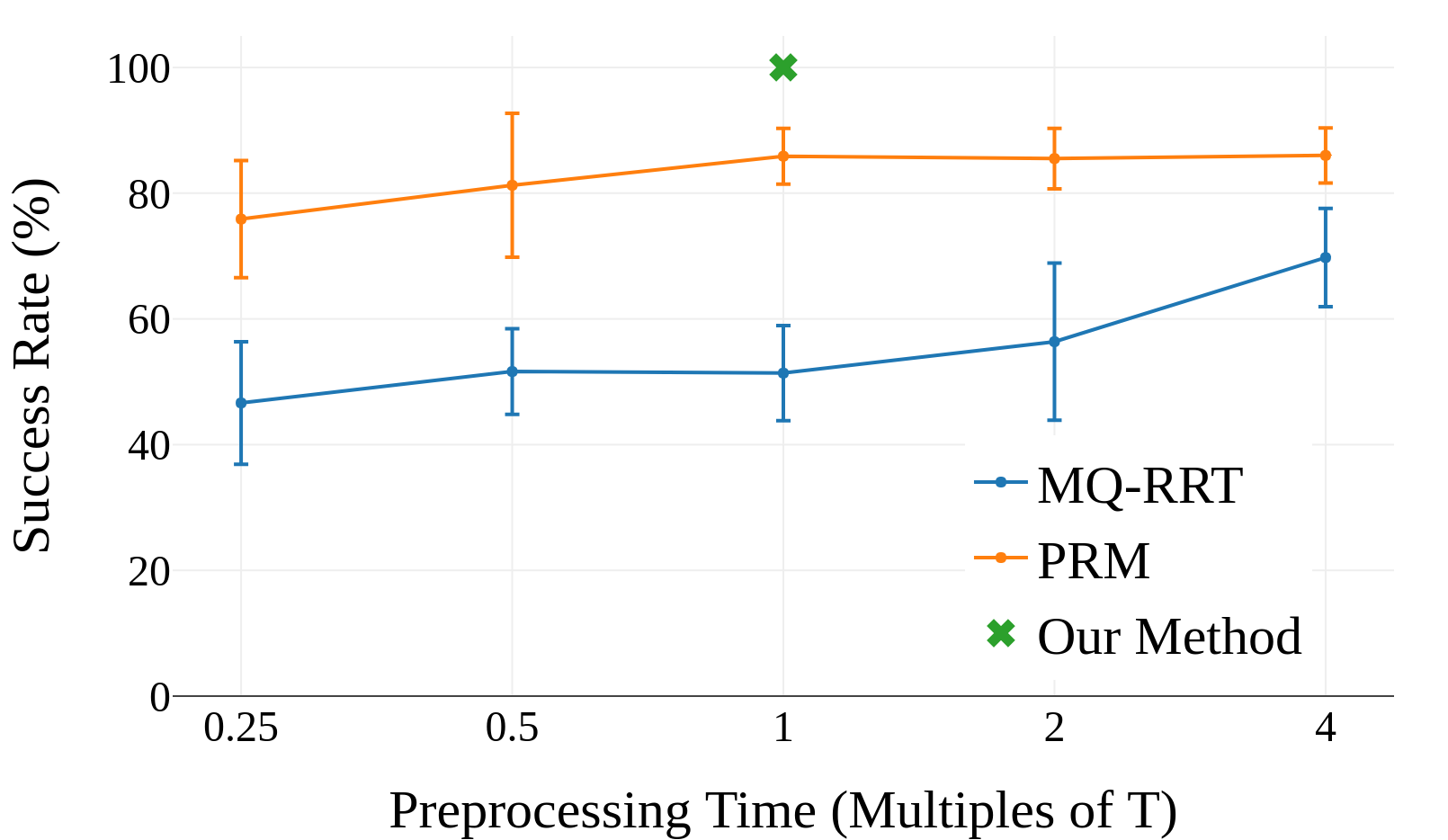}
 }
 \hfill
 \subfloat[\label{fig:memory}]{%
   \includegraphics[width=0.47\textwidth]{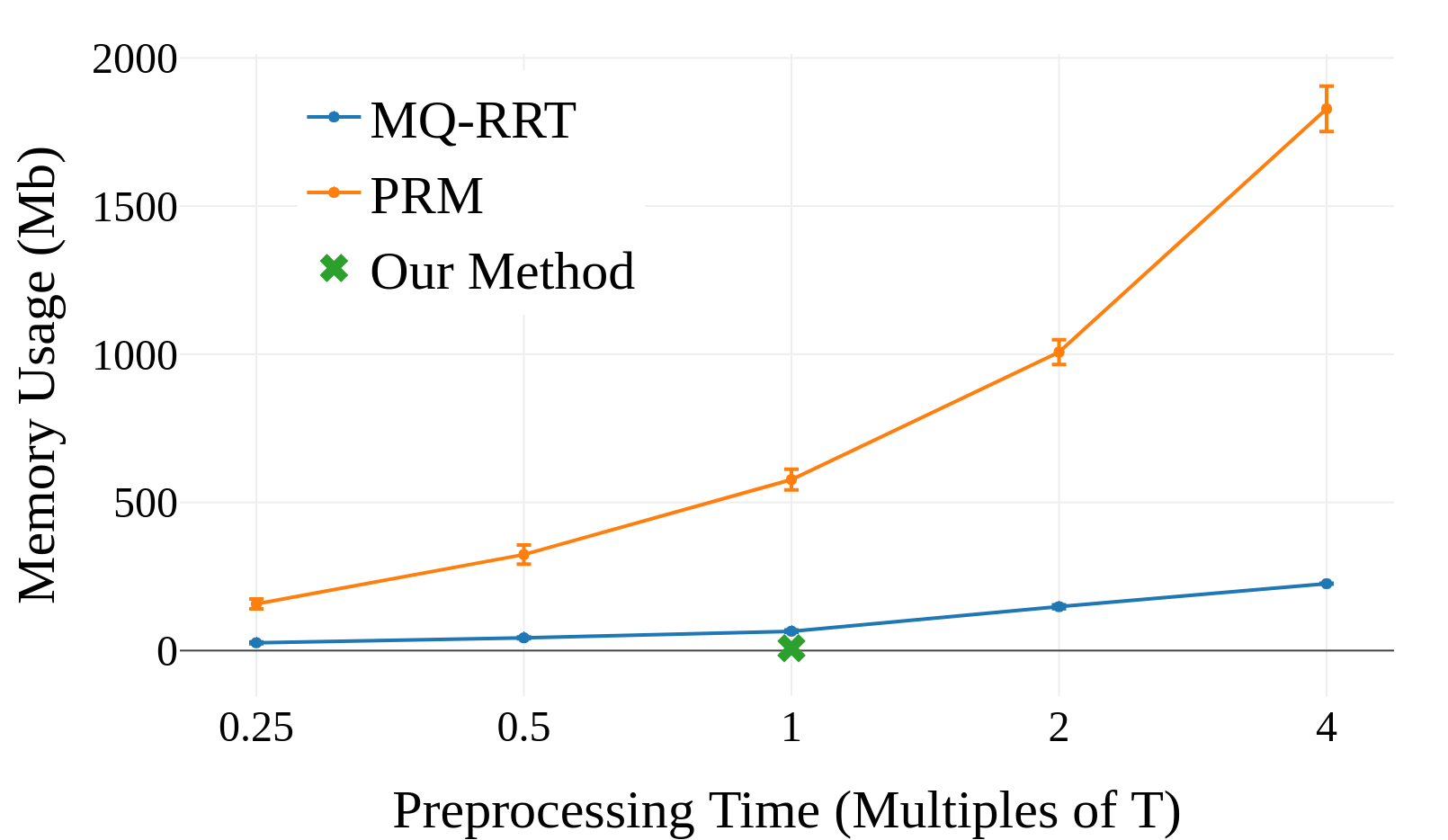}
 }
    \caption{Preprocessing time vs~\subref{fig:success} the success rates and~\subref{fig:memory} memory useage for the 200 queries averaged over 4 trials with different random number generation seeds for the PRM and the MQ-RRT algorithms. The results were computed at the intervals which are multiple of the time T = 1,455 seconds, that our method takes for precomputation. The green cross shows our method for reference.}
    \label{fig:plots}
\end{figure*}

We evaluated our algorithm on the PR2 robot for a single-arm (7-DOF) motion-planning problem. The illustrated task here is to pick up envelopes from a cart and put them in the cubby shelves (see Fig.~\ref{fig:PR2}). Such settings are common in mailroom environments where the robot may have to encounter the same scenario over and over again. The start state is a fixed state corresponding to the pickup location, whereas the task-relevant goal region~$G$ is specified by bounding the position and orientation of the end effector. For this domain, we define~$G$ as a bounding box covering all possible positions and orientations of the end effector within the cubby shelf.

The search is done on an implicit graph $G_\calS$ constructed using $\textit{motion primitives}$ which are small kinematically feasible motions. We define the primitives in the task-space respresentation as small motions for the robot end effector in position axes ($\textit{x, y, z}$) and orientation axes of Euler angles ($\textit{roll, pitch, yaw}$), and a joint-angle motion for the redundant joint of the 7-DOF arm. 
The heurstic function is the Euclidean distance in these seven dimensions. The discretization we use for the graph $G_\calS$ is 2~cm for the position axes, 10 degrees for the Euler axes and 5 degrees for the redundant joint. 
For this domain, we keep the $\textit{pitch}$ and $\textit{roll}$ of the end effector fixed and allow motion primitives along the remaining five dimensions i.e. $\textit{x, y, z, yaw}$ and the reduntant joint angle. We also limit the $\textit{yaw}$ to be between -30 and 30 degrees. Note that the specification of the $G_\calS$ is purely task specific and we exploit the task constraints to limit the size of $G_\calS$ which makes our preprocessing step tractable (Assumption~\ref{assum:1}).

We compared our approach with different single- and multi-query planners in terms of planning times, success rates and memory consumption (see Table~\ref{tab:stats}) for 200 uniformly sampled goal states from $G$. Among the multi-query planners, we implemented PRM, a multi-query version of RRT which we name MQ-RRT and the E-graph planner. 
For MQ-RRT, we precompute an RRT tree rooted at  $\sStart$ offline (similar to PRM) and query it by trying to connect $\sGoal$ to the nearest nodes of the precomputed tree. We use the same connection strategy for MQ-RRT as the one that the asymptotically-optimal version of PRM uses\footnote{In order for the quality of paths obtained the PRM to converge to the quality of the optimal solution, a query should be connected to its $k$ nearest neighbors where 
$k = e(1+1/d)\log(n)$.
Here $n$ is the number of nodes in the tree/roadmap and $d$ is the dimensionality of the configuration space~\cite{karaman2011sampling,SSH16}. 
}.
For PRM, we precomputed the paths from all the nodes in~$G$ to~$s_{\text{start}}$ (this is analogous to our library~$\calL$). 
The query stage thus only required the connect operation (i.e. attempting to connect to $k$ nearest neighbors of $\sGoal$). For both of these planners we also added a goal-region bias by directly sampling from~$G$, five percent of the time\footnote{We used OMPL~\cite{SMK12} for comparisons with the sampling-based planners and modified the implementations as per needed.}. 

For single-query planning, we only report the results for RRT-Connect as it has the fastest run times from our experience. For PRM and MQ-RRT, if the connect operation fails for a query, we considered that case as a failure. For RRT-Connect, we set a timeout of 10 seconds.

For our method, preprocessing (Alg.~\ref{alg:1}) took 1,445 seconds and returned 1,390 subregions. For precomputing paths (Alg.~\ref{alg:1}, line~\ref{alg:1:pp}), we use RRT-Connect. For the first run of Alg.~\ref{alg:1}, we set the timeout to be 10 seconds and for the second run (after reloading $V$ with the bad attractors), we set the timeout to be 60 seconds. By doing so the algorithm finishes successfully with having no remaining bad attractors.

Table~\ref{tab:stats} shows the numerical results of our experiments. Our method being provably guaranteed to find a plan in bounded time, shows a success rate of 100 percent. For the two preprocessing-based planners PRM and MQ-RRT, we report the results for a preprocessing time of 4T (T being the time consumed by our method in preprocessing). The E-graph planner is bootstrapped with a hundred paths precomputed for uniformly sampled goals in $G_S$. For each of these three multi-query planners while running the experiments, the newly-created edges were appended to the auxilary data (tree, roadmap or E-graph) to be used for the subsequent queries.

Among other planners, PRM shows the highest success rate but at the cost of a large memory footprint. The E-graph planner has a small memory footprint but it shows significantly longer planning times. RRT-Connect being a single-query planner happens to be the slowest. Our method shows a speedup of over tenfold in query time as compared to PRM and MQ-RRT and about three orders of magnitude speedup over the E-graph planner and RRT-Connect. The plots in Fig.~\ref{fig:plots} show how the success rate and the memory footprints of PRM and MQ-RRT vary as a function of the preprocessing time. For our domain the PRM seems to saturate in terms of the success rate after T time whereas MQ-RRT continues to improve provided more preprocessing time. In terms of memory usage, PRM's memory footprint grows more rapidly than MQ-RRT.

\section{Conclusion and future work}
We proposed a preprocessing-based motion planning algorithm that provides provable real-time performance guarantees for repetitive tasks and showed simulated results on a PR2 robot. 
Key questions that remain open for future work regard providing stronger guarantees on the solution quality and bounding the number of covering subregions.
We conjuncture that computing the minimal number of subregions is an NP-Hard problem and, if this is the case, we can possibly seek to compute a set of subregions whose size is within some constant-factor approximation of  the size of the optimal set.
Finally, we plan to extend our approach to semi-static environments where there exists both static and non-static obstacles. In such settings, we can also use our planner as an initial pass, having a conventional planner as a backup. We can preprocess our planner with only static obstacles and in the query time we can do a validity check of the computed path; if the path intersects any non-static obstacle, we can fall back to the conventional planner. This can immensely increase the overall throughput of a system if the environment is largely static. 
 
\fontsize{9.3pt}{10.3pt} \selectfont
\bibliographystyle{aaai}
\bibliography{references}

\end{document}